\theoremstyle:=definition,remark,plain\do{%
        \expandafter\g@addto@macro\csname th@\theoremstyle\endcsname{%
            \addtolength\thm@preskip\parskip
            }%
        }
\crefname{lemma}{lemma}{lemmas}
\Crefname{lemma}{Lemma}{Lemmas}
\crefname{thm}{theorem}{theorems}
\Crefname{thm}{Theorem}{Theorems}
\crefname{prop}{proposition}{propositions}
\Crefname{prop}{Proposition}{Propositions}
\crefname{assumption}{assumption}{assumptions}
\crefname{assumption}{Assumption}{Assumptions}
\newcommand\independent{\protect\mathpalette{\protect\independenT}{\perp}}
\def\independenT#1#2{\mathrel{\rlap{$#1#2$}\mkern2mu{#1#2}}}
\newcommand{\grad}{\nabla}
\def\adl@drawiv#1#2#3{%
        \hskip.5\tabcolsep
        \xleaders#3{#2.5\@tempdimb #1{1}#2.5\@tempdimb}%
                #2\z@ plus1fil minus1fil\relax
        \hskip.5\tabcolsep}
\newcommand{\cdashlinelr}[1]{%
  \noalign{\vskip\aboverulesep
           \global\let\@dashdrawstore\adl@draw
           \global\let\adl@draw\adl@drawiv}
  \cdashline{#1}
  \noalign{\global\let\adl@draw\@dashdrawstore
           \vskip\belowrulesep}}
\renewcommand{\epsilon}{\varepsilon}
\declaretheorem[style=plain,name=Theorem]{theorem}
\newenvironment{example*}
 {\pushQED{\qed}\example}
 {\popQED\endexample}
\numberwithin{equation}{section}
\newcommand{\defnphrase}[1]{\emph{#1}}
\DeclareMathOperator*{\argmin}{argmin}
\newcommand{\given}{\mid}
\newcommand{\intd}{\mathrm{d}}
\newcommand{\dist}{\ \sim\ }
\newcommand{\distiid}{\overset{\mathrm{iid}}{\dist}}
\providecommand\given{} % so it exists
\newcommand\SetSymbol[1][]{
  \nonscript\,#1:\nonscript\,\mathopen{}\allowbreak}
\DeclarePairedDelimiterX\Set[1]{\lbrace}{\rbrace}%
{ \renewcommand\given{\SetSymbol[]} #1 }
      \OR\ifentrytype{incollection}\OR\ifentrytype{inproceedings}%
      \OR\ifentrytype{inreference}}
\crefname{example}{Example}{Examples}
\crefname{lemma}{Lemma}{Lemmas}
\crefname{cor}{Corollary}{Corollaries}
\crefname{theorem}{Theorem}{Theorems}
\crefname{assumption}{Assumption}{Assumptions}
\theoremstyle{definition}
\newtheorem{definition}[theorem]{Definition}
\newcommand{\xz}{X_z}
\newcommand{\xzperp}{X^\perp_z}
\declaretheoremstyle[
%    postheadspace=\newline,
spacebelow=\parsep,
    spaceabove=\parsep,
  mdframed={
    backgroundcolor=gray!10!white,     % vv: weird spacing issue, so leaving transpartent for now
    hidealllines=true, 
    innertopmargin=8pt, 
    innerbottommargin=4pt, 
    skipabove=8pt,
    skipbelow=10pt,
    nobreak=true
}
]{grayboxed}
\crefname{gassumption}{Assumption}{Assumptions}
\definecolor{WowColor}{rgb}{.75,0,.75}
\definecolor{SubtleColor}{rgb}{0,0,.50}
\newcounter{margincounter}
\title{Invariant and Transportable Representations for Anti-Causal Domain Shifts}
\date{}
\author[1]{Yibo Jiang}
\author[2,3]{Victor Veitch}
\affil[1]{Department of Computer Science, University of Chicago}
\affil[2]{Department of Statistics, University of Chicago}
\affil[3]{Google Research}
\begin{document}
\maketitle

\begin{abstract}
Real-world classification problems must contend with domain shift, the (potential) mismatch between the domain where a model is deployed and the domain(s) where the training data was gathered. Methods to handle such problems must specify what structure is common between the domains and what varies. A natural assumption is that causal (structural) relationships are invariant in all domains. Then, it is tempting to learn a predictor for label $Y$ that depends only on its causal parents. However, many real-world problems are ``anti-causal'' in the sense that $Y$ is a cause of the covariates $X$---in this case, $Y$ has no causal parents and the naive causal invariance is useless. In this paper, we study representation learning under a particular notion of domain shift that both respects causal invariance and that naturally handles the ``anti-causal'' structure. We show how to leverage the shared causal structure of the domains to learn a representation that both admits an invariant predictor and that also allows fast adaptation in new domains. The key is to translate causal assumptions into learning principles that disentangle ``invariant'' and ``non-stable'' features. Experiments on both synthetic and real-world data demonstrate the effectiveness of the proposed learning algorithm. Code is available at \url{https://github.com/ybjiaang/ACTIR}.
\end{abstract}

\section{Introduction}
This paper concerns the problem of domain shift in supervised learning, the phenomenon where a predictor with good performance in some (training) domains may have poor performance when deployed in a novel (test) domain. There are two goals when faced with domain shifts. First, we would like to learn a fixed predictor that is \emph{domain-invariant} in the sense that it has good performance in all domains.
Note, however, that even a good domain-invariant predictor may still be far from optimal in any given target domain.
In such cases, we'd like to learn an optimal domain-specific predictor as quickly as possible.
Then, the second goal is to learn a representation for our data that is \emph{transportable} in the sense that, when given data from a new domain, we can use the representation to learn a domain-specific predictor using only a small number of examples.

Domain shifts plague real-world applications of machine learning %\citep{beery2018recognition}, just 1 cite here feels weird
and there is a large and active literature aimed at mitigating the problem \citep[e.g.,][]{arjovsky2019invariant, veitch2021counterfactual, peters2016causal, rothenhausler2021anchor, DBLP:conf/ijcai/0001LLOQ21, DBLP:journals/corr/abs-2103-02503, koh2021wilds, zhuang2020comprehensive, cai2021theory, shi2021gradient, sagawa2019distributionally, bai2020decaug, subbaswamy2019preventing, zheng2021causally, liu2021learning, lu2021nonlinear}.
Empirically, when domain shift methods are applied to wide-ranging benchmarks, there is no single dominant method---indeed, it's common for methods that work well in one context to do worse than naive empirical risk minimization (i.e., ignore the shift problem) in another context \cite{koh2021wilds, DBLP:conf/iclr/GulrajaniL21}.
This problem is fundamental: it is impossible to build predictors that are robust to all possible kinds of shifts.\footnote{For any given predictor, it's possible to adversarially construct a domain where that predictor does poorly.}
Accordingly, it is necessary to specify the manner in which the training and test domains are related to each other; that is, what structure is common to all domains, and what structure can vary across them. 
Then, to make progress on the domain shift problem, the task is to identify structural assumptions that are well matched to real-world problems and then find methods that can achieve domain-invariance and transportability under this structure.

\looseness=-1
In this paper, we rely on a particular variant of the assumption that structural causal relationships are invariant across domains, but certain ``non-causal'' relationships may vary.
The motivation is that relationships fixed by the underlying dynamics of a system are the same regardless of the domain \cite{peters2017elements}.
A similar causal domain-structure assumption is already well-studied in the domain-shift literature \citep[e.g.,][]{peters2016causal,arjovsky2019invariant,rothenhausler2021anchor, DBLP:journals/jmlr/Rojas-CarullaST18, DBLP:conf/icml/MuandetBS13}.
There, the goal is to predict the target label $Y$ using only its causal parents (reconstructed from observed features $X$). In particular, the aim is to build predictors that do not rely on any part of the features that is causally affected by $Y$. 
However, in many problems, it can happen that the observed covariates $X$ are all caused by $Y$, so that the causal parents of $Y$ are the empty set. In this case, the naive causally invariant predictor is vacuous.

The purpose of this paper is to study an alternative causal notion of domain shift that handles such ``anti-causal'' ($Y$ causes $X$) problems, and that maintains the interpretation that structural causal relationships are held fixed across all domains.
Specifically, 
\begin{enumerate}
    \item We formalize the anti-causal domain shift assumption.
    \item We show how the causal domain shift assumption can be leveraged to find an invariant predictor and transportable representation. 
    \item We use this as the basis of a concrete learning procedure for domain-invariant and domain-adaptive representations.
    \item We conduct several empirical studies, finding that the procedure can effectively learn invariant structures, learn fast-adapting structures, and disentangle the factors of variation that vary across domains from those that are invariant. 
\end{enumerate}

\section{Causal Setup}
The first step is to make precise what structure is preserved across domains, and what structure varies. Once we have this, we'll make the notions of invariant and fast-adapting predictor precise.

In each domain $e$, we have observed data $(X_i, Y_i) \distiid P^e$, where $P^e$ is a domain-specific data-generating distribution. We will mainly consider problems where we have (finite) datasets sampled from multiple distinct domains at training time, and wish to make predictions on data sampled from some additional domains not observed during training. This paper discusses classification problems where $Y_i$ is discrete.

\begin{figure}
\vspace{-0.5cm}
  \centering
  \includegraphics[width=0.35\textwidth]{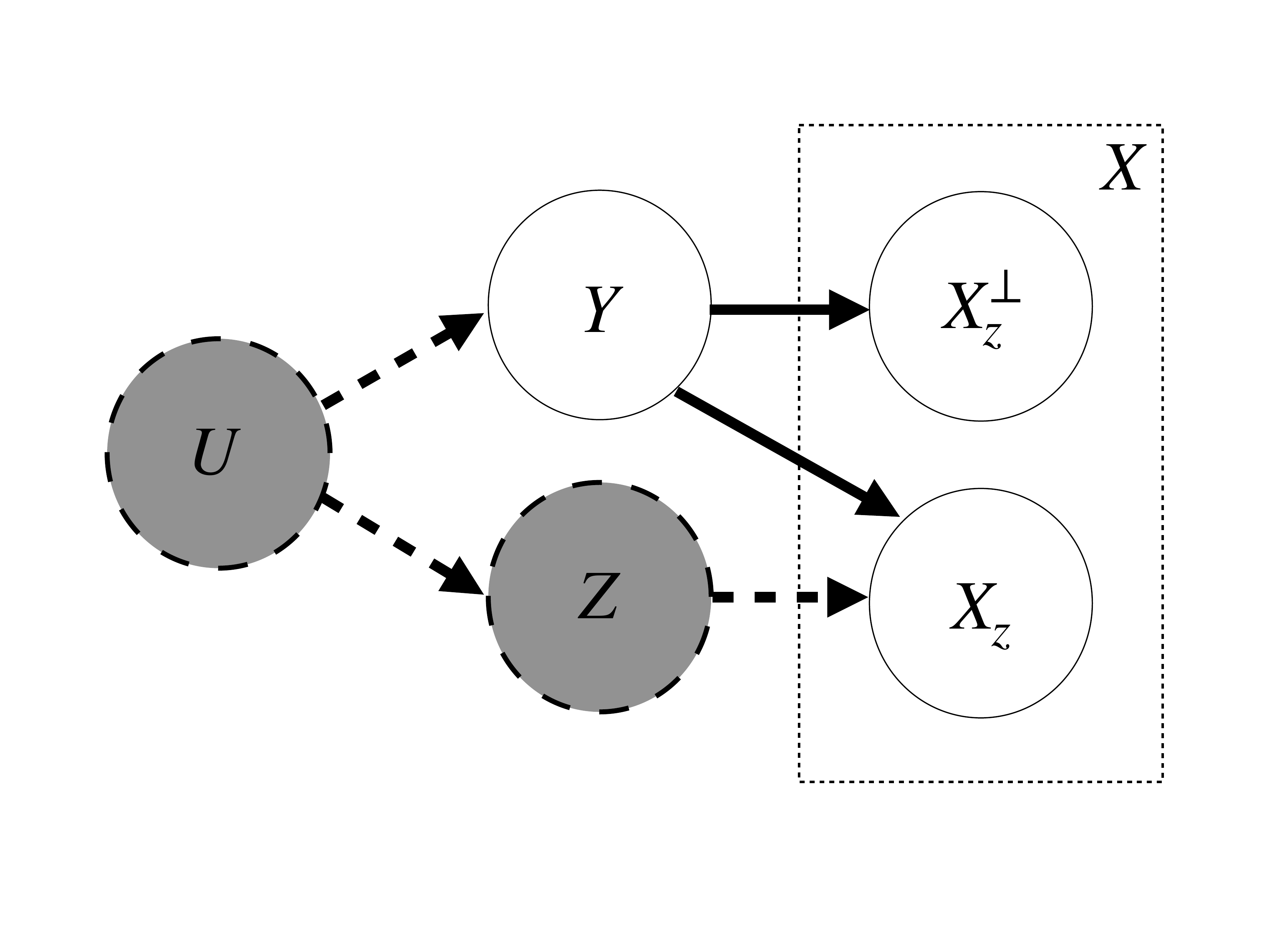}
  \caption{Causal model for the data generating process. We decompose the observed covariate X into
latent parts defined by their causal relationships with Z. Solid circles denote observed variables,
while shaded circles denote hidden variables.}
\label{fig:causal-graph}
\vspace{-0.5cm}
\end{figure}

To formalize the causal structure assumption, we'll introduce two additional latent (unobserved) variables. First, $Z$, a (subset of) the causes of $X$. Second, $U$, a confounding variable that affects both $Z$ and $Y$; see \cref{fig:causal-graph}. Conceptually, $Z$ are the factors of variation where the association with $Y$ can vary across domains. The confounder $U$ is the reason the association can vary. The relationship between $Y$ and $Z$ induced by $U$ needs not be stable across domains. 
%We do not introduce explicit notation for the factors of variation that have invariant relationships with $Y$. \fTBD{yibo: not sure what the comment in the marked up pdf means?} vv: don't remember, but I don't think we need this sentence
%We will see these can be naturally handled implicitly.
Slightly abusing notation, we have that $Z_i,U_i,X_i,Y_i \distiid P^e$ in each domain $e$. %Both $Z$ and $U$ are unobserved. 
We can now describe the set of domain shifts we consider. 

\begin{definition}(Compatible Anti-Causal Shift Domains)
Distributions $\{P^e\}$ (over $X,Y$) are \defnphrase{compatible anti-causal shift domains} if the following conditions hold. First, there are unobserved variables $Z,U$ such that causal graph in \cref{fig:causal-graph} holds in all domains.
Second, there is a fixed distribution $P$ and for each $e$ there is some distribution $Q^e(U)$ such that $P^e(X,Y,Z) = \int P(X,Y,Z \given U) \intd{Q^e(U)}$.
\end{definition}
Informally: The causal structure is fixed in all domains (implying the conditional distribution over X, Y, Z given U is the same). We allow \emph{only} the distribution of the unobserved common cause $U$ to vary. 

% the causal structure (implying the joint distribution over X , Y, Z , U ) 

This notion of domain shift respects the preserved-causal-structure desiderata. However, it is not obvious that it suggests any useful algorithms for learning robust predictors. This is the subject of the remainder of the paper.

\subsection{Invariant Prediction}\label{ssec:invariant-prediction}
Intuitively, a predictor will be robust against domain shifts if it depends only on causes of $X$ that have a stable relationship with $Y$ in all domains. 
In our setup, these are the factors of variation that are not included in $Z$. 
Accordingly, we want a predictor that depends only on the parts of $X$ that are not causally influenced by $Z$. 

\looseness=-1
To formalize this notion, we'll use the concept of \defnphrase{counterfactual invariance} to $Z$ \cite{veitch2021counterfactual}. A function $f$ is counterfactually invariant to $Z$ if $f(X(z))=f(X(z'))$ for all $z,z'$, where $X(z)$ denotes the counterfactual $X$ we would see had $Z$ been $z$.
Learning a predictor that does not depend on the factors of variation $Z$ that induce unstable relationships means learning a predictor that is counterfactually invariant to $Z$.

%\fTBD{I still feel like using predictor is best here. The flow of this paper is to start with invariant predictor and then combine that with adaptive representation. Finally, in section 4.1, we get a representation that can induce invariant prediction that also allows adaptation. Using representation here might be confusing and broke the current flow of the paper.}
Part of our goal in the following will be to learn a counterfactually invariant predictor. This causal notion of invariance is closely related to the notion of invariance that requires a predictor to be the risk minimizer in all domains \cite{peters2016causal,arjovsky2019invariant}. Specifically, under the anti-causal structure, if prior distributions $P^e(Y)$ are the same in all domains, then \citet[][~Thm. 4.2]{veitch2021counterfactual} shows that if $f$ is the counterfactually invariant predictor with the lowest risk in any training domain, it is also the counterfactually invariant predictor with the lowest risk in all domains. Even when $P^e(Y)$ is not constant across domains---there's a prior shift---imposing counterfactual invariance should still improve out-of-domain performance since it removes the domain-varying part of the features $X$. This is supported by experiments in \Cref{ssec:vlcs}. 
Throughout the paper, we use the term "invariant" in the sense of counterfactual invariance.

% To further motivate this goal, we note that \cite[][~Thm. 4.2]{veitch2021counterfactual} states that under certain conditions ($P^e(Y)$ is the same in all domains), if $f$ is the counterfactually invariant predictor with the lowest risk in any training domain, it is also the counterfactually invariant predictor with the lowest risk in all domains as long as all domains are anti-causal shift domains. This is the condition that is usually asked for when trying to learn predictors that are robust or invariant to domain shifts. That is, the causal notion of invariance is closely related to the more conventional low-risk notion of invariance.

\subsection{Causal Decomposition of $X$}
To go further in our formalization, we'll need another idea from \citet{veitch2021counterfactual}: the decomposition of $X$ into (latent) parts defined by their causal relationship with $Z$. 
We define $\xzperp$ to be the part of $X$ that is not causally affected by $Z$. More precisely, $\xzperp$ is the part of $X$ such that any function of $X$ is counterfactually invariant if and only if it is a function of $\xzperp$ alone (that is, $f(X)$ is $\xzperp$ measurable). Under weak conditions on $Z$, $\xzperp$ is well defined (e.g., discrete $Z$ suffices) \cite{veitch2021counterfactual}.

We also introduce $\xz$ to denote the part of $X$ that is not invariant to $Z$. We make the extra assumption that $\xzperp$ does not have a causal effect on $\xz$ (the other direction is ruled out by the definition of $\xzperp$). 
The meaning of this assumption is that the $Z$-specific parts of $X$ can be disentangled in the sense that it's possible to vary the other parts of $X$ without affecting $\xz$. 
For example, we can change the object of the image without changing the background. This is a non-trivial assumption about the structure of the anti-causal shift domains, baked into the causal compatibility assumption by the absence of an arrow between $\xz$ and $\xzperp$. %\fTBD{Not the exact assumption, but similar in the sense that there is no edge between two features. I am not exactly sure what the comment is trying to change here.} 
%We also note that this assumption is similar to an assumption made in causal disentanglement \citep{suter2019robustly}.

% It is not at all obvious how we can learn such predictors in our causal setup. After all, $Z$ is unobserved in our setting and we cannot simply let our predictor be conditionally independent of $Z$ like in \citet{veitch2021counterfactual}. Instead, as we will explain in the following section, we will learn such predictors by exploiting the interplay of invariance and adaptation. 

\subsection{Rapid Adaptation}
\looseness=-1
Even if $P(Y)$ is held fixed, the optimal counterfactually invariant predictor $g(X)$ is unlikely to be the best predictor in any given domain. The reason is that it excludes $Z$-dependent information that may in fact be highly predictive in a given domain. Given a new domain $e$, we would like to be able to quickly learn a new predictor $f^e(X)$ that updates the invariant predictor with domain-specific associations. This update should only depend on $\xz$ because the relation between $\xzperp$ and $Y$ is stable. Accordingly, we want to learn a representation $h(X)$ that encapsulates the information in $\xz$. Moreover, this should be done in a manner such that, given $g(X)$ and $h(X)$, we can learn a good predictor for $P^e$ with only a small number of samples.

To formalize this, we'll introduce the following domain-specific predictors:
\begin{align}
\label{eqn:main}
f^e(X) &= g(X) + M^e h(X) 
\end{align}

\looseness=-1
Here $f^e(X)$ is logits. In words: $f^e$ adds a correction to the invariant predictor $g$ that is logit-linear in the learned representation $h(X)$. 
We take the correction to be a linear map because, once $h$ is known, linear maps are very sample efficient to learn. 
Accordingly, we can formalize ``learn $h$ such that we can rapidly adapt in new domains" as ``learn $h$ such that the domain-specific predictor with optimal $M^e$ has low risk under $P^e$".
Then, our second goal is to learn such a representation $h$. 
%\fTBD{This line is added because of a previous comment you made about $g(X)$ is just logit not a predictor} Note that technically, $\argmax g(X)$ is the invariant predictor. For simplicity, we refer to $g(X)$ itself as the predictor. vv: in the current draft, this point doesn't seem confusing enough to need clarification 

\subsection{Learning Goals}\label{ssec:learning-goals}
We have now given a causal formalization of the domain transfer scenario we consider, and formalizations of the problems of learning invariant and rapidly adapting predictors. 
With the causal notation in hand, our goal can be plainly stated.
We want to learn an invariant $g(X)$ and a domain-varying $h(X)$ with the following properties.  
\begin{enumerate}
    \item $g(X)$ depends only on $\xzperp$.
    \item $g(X)$ has low risk in each training domain.
    \item $h(X)$ depends only on $\xz$.
    \item $f^e(X) = g(X) + M^e h(X)$ should have low risk in each training domain, where $M^e$ is the linear map that minimizes the domain-specific risk.
\end{enumerate}

The challenge now is that we do not observe $Z$ (or $U$) for any data point and we do not know the decomposition of $X$ into $\xzperp$ and $\xz$. As we will see in the next section, we can find a relaxation that is enforced with observed data which relies on the particular anti-causal structure.

\section{Observable Signature}
The first problem we must confront is how to learn a function $g(X)$ that depends on $\xzperp$ alone, and $h(X)$ that depends on $\xz$ alone. Strictly speaking, learning such functions precisely would be impossible, even if we observed $Z$ \cite{veitch2021counterfactual}. 
The reason is that we have access to only observational data, but the two parts of $X$ are defined in terms of the underlying causal structure.
Instead, the best we can hope for is to require that $g(X)$ and $h(X)$ satisfy the observable implications of the causal structure. That is, the properties of the causal assumption that can actually be measured using the observed data. 

When $Z$ is observed, a signature for $g(X)$ is that $g(X)$ is conditionally independent of $Z$ given $Y$ \citep[][]{veitch2021counterfactual}. But when $Z$ is unobserved, it is challenging to learn $g(X)$ and $h(X)$, as these representations of $X$ are intimately tied to $Z$.

The key observation is that there are two relations that connect $g(X)$ and $h(X)$. The first relation comes from the causal graph. In particular, we want to impose the requirement that $g(X)$ and $h(X)$ satisfy the observable implications of the causal structure. 
The next theorem gives such an observable implication, which can serve as an observable signature of the causal decomposition.  

\begin{restatable}{theorem}{causalsign}
\label{thm:causal-sign}
If $g(X)$ depends only on $X_z^\perp$ and $h(X)$ depends only on $X_z$ , then, under the causal graph in \Cref{fig:causal-graph}, $g(X) \independent h(X) \;\vert \;Y$.
\end{restatable}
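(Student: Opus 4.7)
The plan is to prove the stronger statement $X_z^\perp \independent X_z \mid Y$, from which the theorem follows immediately because $g(X)$ and $h(X)$ are measurable functions of $X_z^\perp$ and $X_z$ respectively. To establish this conditional independence, I would apply d-separation to the causal graph in \Cref{fig:causal-graph}, treating $X_z^\perp$ and $X_z$ as (possibly vector-valued) nodes and using the fact that the causal assumptions explicitly forbid any arrow between them (the definition of $X_z^\perp$ rules out $X_z \to X_z^\perp$, and the paper assumes $X_z^\perp \not\to X_z$).

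The key step is to enumerate all undirected paths between $X_z^\perp$ and $X_z$ and check that each is blocked by $\{Y\}$. Given the structure (with $U \to Y$, $U \to Z$, $Z \to X_z$, and $Y$ a parent of both $X_z$ and $X_z^\perp$), the only paths are: (i) $X_z^\perp \leftarrow Y \to X_z$, which is a fork at $Y$ and is blocked as soon as we condition on $Y$; and (ii) $X_z^\perp \leftarrow Y \leftarrow U \to Z \to X_z$ (together with its mirror $X_z^\perp \leftarrow Y \leftarrow U \to Z \to X_z$ read the other way), on which $Y$ appears as a non-collider (a chain node, with one arrow in from $U$ and one arrow out to $X_z^\perp$), so again conditioning on $Y$ blocks it. Because the forbidden direct edge is precisely what would have created an unblockable path, these exhaust the cases.

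The main thing to check carefully is that no path contains a collider whose conditioning set is opened by $Y$. Since $Y$ is an ancestor of both $X_z^\perp$ and $X_z$ and not a descendant of any collider on the relevant paths (the only potential colliders would be the $X$ variables themselves, which are endpoints), conditioning on $Y$ cannot activate a blocked path. Thus every path between $X_z^\perp$ and $X_z$ is blocked by $Y$, giving $X_z^\perp \independent X_z \mid Y$.

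Finally, since $g(X)$ is a function of $X_z^\perp$ alone and $h(X)$ is a function of $X_z$ alone, conditional independence of the generating variables implies conditional independence of their functions, yielding $g(X) \independent h(X) \mid Y$. The only real subtlety is justifying d-separation when $X_z^\perp$ and $X_z$ are latent decompositions (rather than canonical DAG nodes); I would handle this by invoking the construction of \citet{veitch2021counterfactual}, under which $X_z^\perp$ and $X_z$ can be treated as well-defined nodes in an extended causal graph with the edge structure described above.
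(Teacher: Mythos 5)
Your proposal is correct and follows essentially the same route as the paper: read off $X_z^\perp \independent X_z \mid Y$ by d-separation in the graph of \Cref{fig:causal-graph}, then conclude that measurable functions of conditionally independent variables remain conditionally independent. The paper simply asserts the d-separation step, whereas you enumerate and block the individual paths; the substance is identical.
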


The usefulness of this theorem is that the conditional independence statement can be measured from data, and enforced in the model training.

The second relation is subtler and it comes from our formulation of domain-specific predictors $f^e$. Specifically, $f^e$, as a linear combination of $g$ and $h$, should minimize the risk in every domain. And by only allowing coefficients of $h$ to change, we hope $g$ would capture "invariant" information ($\xzperp$) and $h$ would learn "unstable" information ($\xz$).

Therefore, to try to enforce conditions 1 and 3 in \Cref{ssec:learning-goals}, we can learn $g$ and $h$ jointly to minimize domain-specific risk, while enforcing that $g(X)$ and $h(X)$ satisfy the conditional independence implied by the causal structure. Since the observable signature is only necessary (not sufficient) for the causal decomposition and there could be multiple candidates of $g$ and $h$ pairs that can parameterize $f^e$ in the aforementioned way, it's not guaranteed to recover $g$ and $h$ that only rely on $\xzperp$ and $\xz$ respectively. However, it does strongly constrain the functions we can learn. And, as we will see in \Cref{sec:experiments}, enforcing the signature does lead to predictors with good robustness and fast adaptation properties.

\subsection{Causal Regularization}
We enforce $g(X)$ and $h(X)$ to satisfy the conditional independence condition via regularization. Specifically, we want to define a regularizer such that its value goes to zero whenever the conditional independence requirement is met. In general, measuring conditional independence is hard \citep{zhang2012kernel, fukumizu2007kernel, tolstikhin2016minimax}. Instead, we enforce a weaker condition that uses the following fact. 

\begin{restatable}{lemma}{condindep}
\label{lemma:condindep}
If $A \independent B \; \vert \; D$, then, $\mathbb{E}[A \cdot (B - \mathbb{E}[B|D])] = 0$
\end{restatable}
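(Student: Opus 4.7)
The plan is to apply the tower property (law of iterated expectations) conditioning on $D$, and then exploit the conditional independence to factorize the inner expectation. This is a textbook identity; the only steps are choosing the right conditioning order and using $\sigma(D)$-measurability to pull $\mathbb{E}[B \mid D]$ out of a conditional expectation.

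Concretely, I would first write
\[
\mathbb{E}\!\left[A \cdot (B - \mathbb{E}[B \mid D])\right] \;=\; \mathbb{E}\!\left[\,\mathbb{E}\!\left[A \cdot (B - \mathbb{E}[B \mid D]) \,\big|\, D\right]\,\right]
\]
by the tower property. Next, because $\mathbb{E}[B \mid D]$ is $\sigma(D)$-measurable, it behaves as a constant inside the inner conditional expectation, so I can split the inner expression as
\[
\mathbb{E}\!\left[A B \,\big|\, D\right] \;-\; \mathbb{E}[B \mid D]\cdot \mathbb{E}[A \mid D].
\]
Finally, the hypothesis $A \independent B \mid D$ gives $\mathbb{E}[A B \mid D] = \mathbb{E}[A \mid D]\cdot \mathbb{E}[B \mid D]$, so the inner conditional expectation vanishes identically, and taking the outer expectation yields $0$.

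There is no substantive obstacle here; the only things worth double-checking are integrability (so that all the conditional and unconditional expectations exist and the tower property applies—this will be a standing assumption of the paper) and the componentwise interpretation of $A \cdot B$ if either side is vector-valued, in which case the same argument applies entry by entry. The value of the lemma in context is not the proof itself but that it reduces the intractable conditional-independence constraint to the tractable moment condition $\mathbb{E}[A \cdot (B - \mathbb{E}[B \mid D])] = 0$, which can be estimated from samples and plugged into a regularizer.
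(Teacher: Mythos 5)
Your proof is correct and follows essentially the same route as the paper's: the tower property conditioning on $D$, followed by the conditional-independence factorization of the inner conditional expectation (the paper factors $\mathbb{E}[A(B-\mathbb{E}[B\mid D])\mid D]$ directly into $\mathbb{E}[A\mid D]\cdot\mathbb{E}[B-\mathbb{E}[B\mid D]\mid D]=0$, which is your computation up to a trivial rearrangement). Your remarks on integrability and the vector-valued case are sensible housekeeping the paper leaves implicit.
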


This is a necessary but not sufficient condition for conditional independence. However, it is easy to compute and leads to good results in practice (as shown in \Cref{sec:experiments}). With this identity in hand, we define $C_{\mathrm{cond}}(A,B,D)$, the (infinite data) conditional independent regularization term between random variables $A,B$ given random variable $D$, and its empirical estimate $\hat{C}_{\mathrm{cond}}$ as follows:
\begin{equation*}
\begin{split}
    C_{\mathrm{cond}}(A,B,D) &=  \mathbb{E}[A \cdot (B - \mathbb{E}[B|D])] \\
    \hat{C}_{\mathrm{cond}}(\{(a_i, b_i, d_i) \}_{i=1}^n) &= \left \vert \left \vert \frac{1}{n}\sum_i a_i \bigg(b_i - \frac{1}{|\# j: d_j = d_i|}\sum_{j: d_j = d_i} b_i \bigg)  \right \vert \right \vert_1
\end{split}
\end{equation*}
where $\{ a_i \}_{i=1}^n$, $\{ b_i \}_{i=1}^n$, $\{ d_i \}_{i=1}^n$ are samples of $A, B$ and $D$. Here, the conditional random variable $D$ is assumed to be discrete which is true for the use case in this paper. 

\section{Learning Algorithm}
We have reduced our goal to learning $g$ and $h$ such that
\begin{enumerate}
    \item For training domains, $g(X)$ has low risk.
    \item For a given domain $e$, there exists $M^e$ such that $f^e(X) = g(X) + M^e h(X)$ is the risk minimizer of that domain.
     \item $g(X),h(X)$ are constrained by the conditional independence regularization.
\end{enumerate}
We now design a specific algorithm that accomplishes the learning task. First, we parameterize the learning problem in a form that's convenient to use with neural networks. Then, we translate our learning objectives into a bi-level optimization problem. Finally, we introduce a practical algorithm to solve the bi-level optimization problem. We name our method ACTIR for \textbf{A}nti- \textbf{C}ausal  \textbf{T}ranportable and  \textbf{I}nvariant  \textbf{R}epresentation. 

\subsection{Reparameterization}
In principle, we could learn two completely separate functions $g(X)$ and $h(X)$. However, this can be wasteful. For instance, in vision, many low-level features can be reused by different predictors. To address this, we first notice that we can always rewrite \cref{eqn:main} as follows
\begin{equation}
\label{eqn:practice}
    f^e(X) = (W^b + W^e) \Phi(X).
\end{equation}
That is, as a shared representation $\Phi$ followed by a fixed linear map $W^b$ defining $g$ and a domain-specific linear map $W^e$ defining $M^e h(X)$.\footnote{Consider $\Phi(X) = [g(X)^T, h(X)^T]^T$, $W^b = \begin{bmatrix}
  \mathbf{I} & \mathbf{0} \\
  \mathbf{0} & \mathbf{0}
  \end{bmatrix}$ and $W^e = \begin{bmatrix}
  \mathbf{0} & \mathbf{0} \\
  \mathbf{0} & M^e
  \end{bmatrix}$.}
The task is then learning the representation (which we'll parameterize by a neural network), and the invariant and domain-specific linear maps.

In fact, a further simplification is possible: we can fix $W^b$ to be $\begin{bmatrix}
  \mathbf{I} & \mathbf{0} \\
  \mathbf{0} & \mathbf{0}
  \end{bmatrix}$.
The reason is that, because $\Phi$ is unconstrained, learning $W^b$ doesn't actually add expressive power---any non-zero map suffices.

% This parameterization---linear layers on top of nonlinear representations---is convenient for use with neural networks. The reason is simply that most modern deep learning models have this structure if we interpret the outputs of the penultimate layer as the representation. 

\subsection{Bi-Level Optimization}
We have now reduced our task to a bi-level optimization problem %\fTBD{$W$ is a placeholder here, $W^e$ is the minimizer}. vv: my bad
\begin{equation}
\label{opt:anti-main}
\boxed{\begin{split}
    \min_{\Phi} & \sum_{e \in \mathcal{E}_{tr}} \gamma R^e((W^b + W^e) \Phi ) + (1 - \gamma) R^e( W^b \Phi )\\
    \text{st} &\; W^e \in \argmin_{W} R^e((W^b + W) \Phi ) + \lambda C_{\mathrm{cond}}(W^b \Phi, W \Phi, Y) \;\; \forall e \in \mathcal{E}_{tr}
\end{split}}
\end{equation}
\looseness=-1
where $ \gamma \in [0,1] $, $\lambda > 0$. The set $\mathcal{E}_{tr}$ consists of all training domains, and $R^e(f)$ is the domain-specific population risk defined as $R^{e}(f) = \mathbb{E}_{(\mathbf{x}, \mathbf{y}) \sim P^e}[\ell(f(\mathbf{x}), \mathbf{y})] $ with the cross-entropy loss function $\ell$. 

In words: we try to learn a representation $\Phi$ such that the invariant predictor has low risk (second term), the domain-specific predictor has low risk in each domain (first term), and the domain-specific perturbation $W^e$ is optimal given $W^b$ and $\Phi$ while satisfying the observable signature of the causal condition (constraint, with the $C_{\mathrm{cond}}$ regularization). 

\subsection{Practical Algorithm}\label{ssec:algo}
\Cref{opt:anti-main} is a challenging optimization problem. In general, each constraint calls for an inner optimization routine. 
So instead of solving \Cref{opt:anti-main} directly, we use a gradient penalty to make the problem more tractable. 
Specifically, we translate the condition that the domain-specific risk is optimal (the inner loop) into the condition that the gradient of the domain-specific risk with respect to $W^e$ is $0$. Then, we regularize the $\ell_2$-norm of this gradient. This is inspired by a similar trick used in Invariant Risk Minimization \cite{arjovsky2019invariant}. The finite sample objective function can be expressed as:

\vspace{-15pt}
\begin{equation*}
\begin{split}
    &L(W^b, W^e, \Phi) = \\
    &\sum_{e \in \mathcal{E}_{tr}} \bigg[\sum_{(x, y) \in D^{e}}   \gamma \ell\big((W^b + W^e) \Phi(x), y\big) + (1 - \gamma) \ell \big(W^b \Phi(x), y\big)  \bigg] + \lambda_g \sum_{e \in \mathcal{E}_{tr}} \|\grad L^{e}_\mathrm{inner}(W^e)\|^2
\end{split}
\end{equation*}
where $\lambda_g > 0$ is a regularization coefficient for the gradient penalty, $D^{e}$ is a labeled dataset collected from training domain $P^e$ and $L^{e}_\mathrm{inner}$ is given by
\begin{equation*}
\begin{split}
    L^{e}_\mathrm{inner}(W) &= \sum_{(x, y) \in D^{e}} \ell \Bigg((W^b + W) \Phi(x), y \Bigg) + \lambda \hat{C}_{\mathrm{cond}}\Bigg( \bigg\{(W^b) \Phi({x}), W \Phi({x}), y  \bigg\}_{(x, y) \in D^{e}}\Bigg)
\end{split}
\end{equation*}

\subsection{Invariant and Adaptive Prediction}
After training, suppose the returned representation is $\hat{\Phi}$. Then the invariant predictor is 
\begin{equation*}
    g(X) = W^{b} \hat{\Phi}(X)
\end{equation*}
Moreover, given a few labeled examples from a new domain, we can find a domain-specific predictor by fine-tuning the linear layer $W^b$.

\section{Related Work} 
\paragraph{Causal Prediction}
Several papers connect causality and robustness to domain shifts. \citep[e.g.,][]{peters2016causal,heinze2018invariant,arjovsky2019invariant, lu2021nonlinear}. These papers usually assume that all domains share a common causal structure, and consider the set of domains induced by arbitrary intervention on any node other than the label $Y$. In this case, the predictor that has invariant risk across domains is the one that depends only on the causal parents of $Y$. By contrast, in this paper, we only allow changes of unobserved confounders---resulting in a much smaller set of possible shifts.  Restricting the possible shifts enlarged the set of possible invariant predictors, allowing for invariant predictors that depend on the descendants of $Y$.

A closely related work is Invariant Risk Minimization \cite{arjovsky2019invariant}, that also seeks to learn a representation $\Phi$ of $X$ such that a fixed linear map on top of the representation yields an invariant predictor. 
The major distinction with the approach here is that we have a different notion of invariance (see \Cref{ssec:invariant-prediction}), and we rely on simultaneously learning the non-stable factors of variation in order to identify the representation. 
% How well IRM actually learns invariant predictors in practice is a subject of some debate \citep{kamath2021does, rosenfeld2020risks,ahuja2020empirical}.
% Beyond the anti-causal setup, the practical approach here may circumvent some of these issues. 

Other papers also consider settings where the covariates $X$ are not direct causes of $Y$ \cite{liu2021learning, DBLP:conf/iclr/MitrovicMWBB21, ilse2020diva}. They assume that both $X$ and $Y$ are caused by latent variables that can be divided into stable and non-stable parts.
Then, they use generative models reflecting this assumption. By contrast, the approach in this paper is fully nonparametric---there is no explicit modeling of the generative process. Prediction in the anti-causal direction has also been studied in other contexts \cite{DBLP:conf/icml/ScholkopfJPSZM12, li2018deep, wald2021calibration, kilbertus2018generalization}.
In particular, \Citet{DBLP:conf/icml/ScholkopfJPSZM12} study the role of anti-causal learning in semi-supervised learning and transfer learning. 
%In this paper, we introduce additional factors of variation $Z$ to define a different notion of anti-causal domain shift which allows us to design the algorithm nonparametrically.  

This work fits into the emerging literature on causal representation learning \cite[e.g.,][]{besserve2018counterfactuals, locatello2020weakly, DBLP:journals/corr/abs-2102-11107, wang2021desiderata}.
This literature seeks to find representations that disentangle causally meaningful components of the data---here, we disentangle the factors of variation that have domain-stable or domain-varying relationships with the target $Y$.

\looseness=-1
\Citet{veitch2021counterfactual} introduce the notion of counterfactual invariance to a spurious factor and make some connections with domain shifts. However, they assume $Z$ is known in advance and observed, and rely on this to learn the counterfactually-invariant predictor. In contrast, in this paper we merely assume the existence of some $Z$---we don't need to know it in advance, and we don't need to measure it directly. And, they use data from only a single domain, whereas we require observations from several distinct domains. We also treat the problem of learning transportable representations, but they only handle invariant learning.

\textbf{Domain Adaptation and Meta Learning}
There have been numerous fruitful developments in the fields of domain generalization and adaptation \citep[e.g.,][]{DBLP:journals/corr/abs-2103-02503, DBLP:conf/ijcai/0001LLOQ21}, including ones under various causal assumptions \citep{zhang2013domain, magliacane2018domain,JMLR:v22:20-1227, subbaswamy2019preventing, DBLP:conf/icml/ScholkopfJPSZM12, lv2022causality}. 
A distinctive aspect of the work in this paper is that we consider the interplay between both the problem of invariant/robust learning and adaptation. %And we are doing so by learning a common representation and training a linear classifier on top to distinguish the two parts. 

%Our notion of invariance is also based on the newly proposed counterfactual invariance \citep{veitch2021counterfactual}. 

\looseness=-1
The adaptive part of the learning model in this paper is also related to meta learning, where the goal is to learn predictors that can quickly adapt to new tasks. Meta learning has been used for supervised learning \citep{santoro2016meta}, reinforcement learning \citep{wang2016learning} and even unsupervised learning \citep{jiang2019meta}. %In the standard supervised classification case, meta learning is often used for one-shot and few-shot learning where training data is limited. 
Traditional approaches to meta learning include defining a distribution over the structure of input data to perform inference \citep{lake2011one} or to use a memory model such as LSTM \cite{wang2018prefrontal}. But the dominant models for meta learning are generic gradient-based learning methods such as MAML \citep{finn2017model} and Reptile \citep{nichol2018first}. Theoretically, \citet{tripuraneni2021provable} and \citet{du2020few} also examine the representation power of meta-learning. Although not motivated by causality, they show that if there is a shared common structure, meta learning can be used to reduce sample complexity in unseen domains. 

\section{Experiments}\label{sec:experiments}
The main claims of the paper are:
\begin{enumerate}
    \item The invariant predictor $g(X)$ will have good performance in new domains, so long as the shifts obey the anti-causal structure.
    \item The learned representation $\Phi$ enables rapid adaptation to new domains by learning only a linear adjustment term on top of $\Phi$.
    \item The learned $\Phi$ disentangles the parts of $X$ that are not affected by $Z$ from the parts that are. 
\end{enumerate}

\looseness=-1
To evaluate the above claims, we conduct experiments on synthetic and real-world data. While causal structures of real-world problems like image classification are usually unknown, we find that the anti-causal based method works well on many such problems---suggesting the anti-causal structure is appropriate. We also provide a counterexample showing that ACTIR can fail when causal assumptions fail to hold in, \Cref{ssec:counter}.

\looseness=-1 \paragraph{Baselines}
For each experiment, all methods share a common architecture; they differ only in objective functions or optimization procedures. For invariant learning, we compare with empirical risk minimization (ERM), IRM \citep{arjovsky2019invariant} and the MAML \citep{finn2017model} base learner. To test how well learned representations $\Phi$ can enable fast adaptation, we fine-tune linear models on top of the representation. For comparison, we also fine-tune linear layers on top of the representations (penultimate layers) from ERM, IRM, and MAML. It has been shown recently that fine-tuning the last layer of models trained by ERM has surprisingly good performance on many real-world datasets \citep{https://doi.org/10.48550/arxiv.2202.06856}. For MAML, the last layer is trained using the MAML update rule.

\subsection{Synthetic Dataset}\label{ssec:synth}
We generate synthetic data according to the following structural equations (which obey the anti-causal structure):
\begin{equation*}
\begin{split}
    & Y \leftarrow \mathrm{Rad}(0.5) \quad X_z^{\perp} \leftarrow Y \cdot \mathrm{Rad}(0.75) \quad Z \leftarrow Y \cdot \mathrm{Rad}(\beta_e) \quad X_z \leftarrow Z
\end{split}
\end{equation*}
where input $X$ is $(\xz, \xzperp)$ and $\mathrm{Rad}(\beta)$ means that a random variable is $-1$ with probability $1-\beta$ and $+1$ with probability $\beta$. We create two training domains with $\beta_e \in \{0.95, 0.7\}$, one validation domain with $\beta_e = 0.6$ and one test domain with $\beta_e = 0.1$. Prediction with $\xzperp$ is stable but has a lower accuracy compared to prediction with $X_z$ during training. If a learning model only chooses the classifier with the best prediction accuracy in training domains and ignores its instability, it will choose $X_z$ as its predictor and end up with only $10\%$ accuracy on the test set. The robust predictor would be $\xzperp$ with $75\%$ accuracy. On the other hand, in the test domain, $-X_z$ predicts $Y$ with $90\%$ accuracy---so an adaptive predictor is better than the invariant one.

We use a three-layer neural network with hidden size $8$ and ReLU activation for $\Phi$ and train the neural network with Adam optimizer. The hyperparameters are chosen based on performance on the validation set. For the fine-tuning test, we run $20$ steps with a learning rate $10^{-2}$. The result is shown in \Cref{tab:full}. Both IRM and ACTIR learn good invariant predictors. But ACTIR is also equipped with the ability to adapt given a very small amount of data points while the performance of IRM stays the same after fine-tuning. Perhaps unsurprisingly, ERM has a test accuracy of $10\%$, suggesting that it uses only spurious features $\xz$.

\begin{figure}
  \centering
\begin{subfigure}{.4\linewidth}
  \centering
  \includegraphics[width=\linewidth]{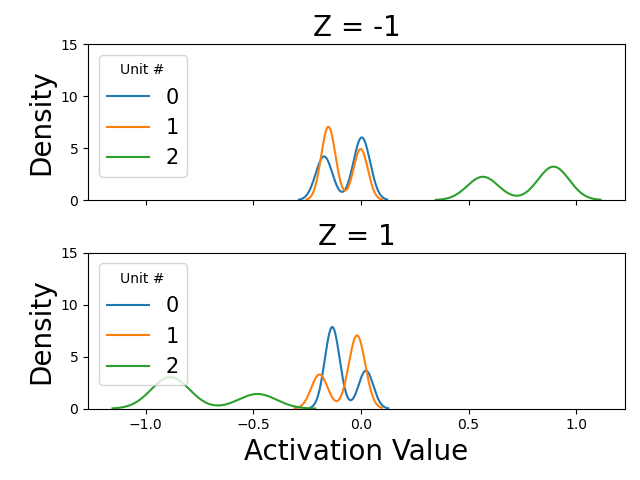}
  \caption{Synthetic Dataset}
  \label{fig:sub1}
\end{subfigure}%
\begin{subfigure}{.4\linewidth}
  \centering
  \includegraphics[width=\linewidth]{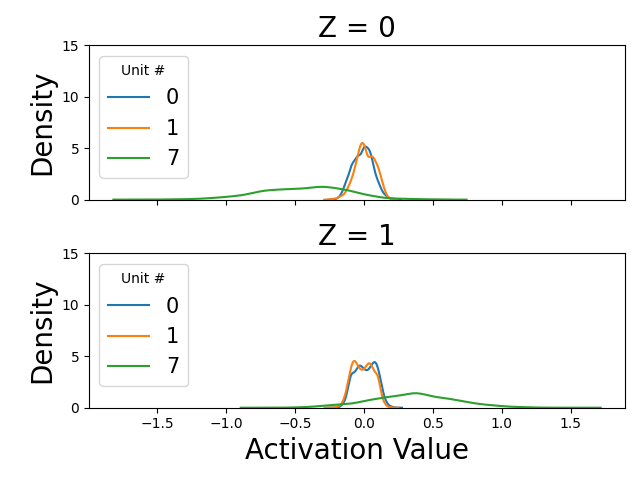}
  \caption{Color MNIST}
  \label{fig:sub2}
\end{subfigure}
 \caption{The learned representation $\Phi$ disentangles stable and unstable factors of variation. Plots show activation levels of different representation units. Units 0 and 1 are trained as the invariant part of the representation. The invariant units have no dependency on the unstable factor $Z$, but the other unit has a strong dependency.
 For synthetic dataset, $Z$ is the random variable defined in the structural equations. For Color MNIST, $Z$ is the color.}
 \label{fig:causal-disentangle}
 \vspace{-10pt}
\end{figure}

\subsection{Color MNIST}
\looseness=-1
Color MNIST modifies the original MNIST dataset \citep{arjovsky2019invariant}. First, we assign label 0 to digits 0-4 and label 1 to digits 5-9. We then flip the label with probability $25\%$ and assign colors to the original images based on the label but with a flip rate $1 - \beta^e$. That is, we assign color $0$ to images with label $0$ with probability $\beta^e$. Here, color is naturally the factor of variation $Z$. We create two training domains with $\beta^e \in  \{0.95, 0.7 \}$, a validation domain with $\beta^e = 0.2$ and a test domain with $\beta^e = 0.1$. Ideally, we want to learn invariant predictors based on the shape of the digit---this will achieve  $75\%$ accuracy. But the problem is significantly more challenging than the synthetic example because the color is a much easier feature to learn than the shape of the digit, making models more susceptible to spurious correlations. We use a three-layer convolutional neural network for $\Phi$ and train the neural network with Adam optimizer. The hyperparameters are chosen based on performance on the validation set. For the fine-tuning test, we run $20$ steps with a learning rate $10^{-2}$. The result is shown in \Cref{tab:full}. ACTIR learns both the invariant and adaptive structure significantly better than reference baselines. 

\textbf{Causal Disentanglement} To understand why ACTIR can adapt to test domains in both synthetic and Color MNIST datasets, we plot distributions of activation values of $\Phi$. See \Cref{fig:causal-disentangle}. 
We see that the first two coordinates---used as the invariant part of the representation in training (see \Cref{ssec:algo})---have distributions that do not depend on the value of $Z$. On the other hand, some other (non-invariant) representation coordinates have activations that change dramatically depending on the value of $Z$. Thus, the representation effectively disentangles the $\xzperp$ and $\xz$ features. Importantly, this is achieved with no a priori knowledge of what $Z$ might be, and no observations of it. 

% \begin{table}[t]
% \label{tab:cam-invar}
% \caption{The causal adaptive model has good invariant on Camelyon17. Table shows accuracy ($\%$) on Camelyon17 dataset. Standard errors are over $5$ runs.}
% \label{tab:cam}
% \begin{center}
% \begin{small}
% \begin{sc}
% \begin{tabular}{lcr}
% \toprule
% Method & Test Acc. \\
% \midrule
% ERM     &70.77$\pm$1.98 \\
% IRM     &71.59$\pm$2.76 \\
% MAML    &70.22$\pm$2.40 \\
% \textbf{CAUSAL}    &\textbf{77.73$\pm$1.74} \\
% \bottomrule
% \end{tabular}
% \end{sc}
% \end{small}
% \end{center}
% \end{table}

\begin{figure}
  \centering
  \includegraphics[width=0.35\textwidth]{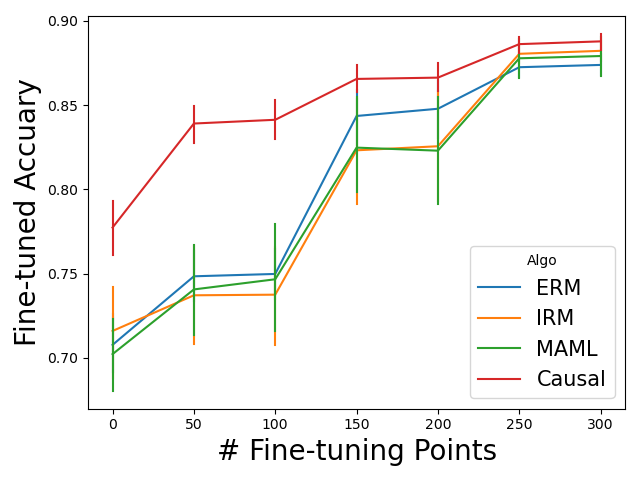}
  \caption{ACTIR has good accuracy after fine-tuning with small datasets on Camelyon17. Graph shows accuracy ($\%$) on Camelyon17 for different numbers of fine-tuning examples. Standard errors are over $5$ runs. For each run, the accuracy is averaged for $100$ fine-tuning tests.}
\label{fig:cam-adaptive}
% \vspace{-1in}
\end{figure}

\begin{table}[t]
\centering
\caption{ACTIR has good invariant and adaptive performance on synthetic datasets and Color MNIST, and it outperforms baseline methods on Camelyon17 for invariant prediction. The table shows accuracy ($\%$). Note that adaptation $(n)$ means that the predictor is tuned on $n$ number of labeled examples. For synthetic datasets, standard errors are over $100$ runs. For Color MNIST, it is over $50$ runs. For Camelyon17, it is over $5$ runs. In each run, the adaptive accuracy is determined by the average of $100$ fine-tuning tests for both synthetic datasets and Color MNIST.}
\label{tab:full}
\begin{center}
\begin{small}
\begin{sc}
\begin{tabular}{lcccccr}
\toprule
  & \multicolumn{2}{c}{\texttt{Synthetic Dataset}} & \multicolumn{2}{c}{\texttt{Color MNIST}} & \multicolumn{1}{c}{\texttt{Camelyon17}} \\ 
 \multicolumn{1}{r}{Method}  & {Test Acc.} & {Adaptation (10)} & {Test Acc.} & {Adaptation (10)} & {Test Acc.} \\
  \midrule
     ERM & 9.95$\pm$0.10 & 11.57$\pm$0.71 & 28.24$\pm$0.51 & 27.26$\pm$0.48  &70.77$\pm$1.98 \\
     IRM & 74.91$\pm$0.13 &  74.27$\pm$0.47 & 59.97$\pm$0.91 & 60.16$\pm$0.90 & 71.59$\pm$2.76\\
     MAML & 17.14$\pm$2.22 &  44.01$\pm$3.48 & 22.18$\pm$1.01 & 75.03$\pm$3.30  & 70.22$\pm$2.40\\
     \textbf{ACTIR} & 74.77$\pm$0.44 & \textbf{89.28$\pm$0.25} & \textbf{70.30$\pm$0.71} & \textbf{85.25$\pm$1.11} & \textbf{77.73$\pm$1.74} \\
\bottomrule
\end{tabular}
\end{sc}
\end{small}
\end{center}
\vspace{-10pt}
\end{table}

\subsection{Camelyon17}
The goal of the Camelyon17 dataset \citep{bandi2018detection} is to predict the existence of a tumor given a region of tissue. This is a binary classification problem. Data are collected from a small number of hospitals. But there are variations in data collection and processing that could negatively impact model performance on data from a new hospital. We take the individual hospitals to be separate domains. The objective is to generalize to new hospitals not seen in training. The dataset consists of input images with size $96 \times 96$ and binary labels that indicate if the central $32 \times 32$ regions contain any tumor tissues. The dataset can be divided into 5 subsets, each from a different hospital. Following the WILDS benchmark \citep[]{koh2021wilds}, we use $3$ for training, $1$ for validation, and the last one for test. 

We use a pre-trained ResNet-18 model for our $\Phi$ and train the whole model using Adam optimizer with a learning rate $10^{-4}$. For the fine-tuning test, we run 20 iterations with a learning rate $10^{-2}$. As shown in \Cref{tab:full}, ACTIR has the best invariant accuracy. For adaptive performance, \Cref{fig:cam-adaptive} shows that ACTIR has a large performance improvement when given a small fine-tuning dataset, while other models require more fine-tuning examples to see a significant increase in accuracy.

\subsection{PACS} \label{ssec:pacs}
PACS \citep{li2017deeper} consists of four domains: art painting, cartoon, photo, and sketch. Each image is of size $\{3, 224, 224 \}$ and belongs to one of $7$ classes: dog, elephant, giraffe, guitar, horse, house, person. The style of the image could be confounded with the label, which is why an invariant predictor is desirable. 

For each domain, we test the model performance by training on the other three domains. The hyperparameters are chosen using leave-one-domain-out cross-validation \citep{DBLP:conf/iclr/GulrajaniL21}. This means that we train $3$ models by leaving one of the training domains out and using it as a validation set. We choose the hyperparameters that have the highest average performance of these $3$ models on validation sets. And then we retrain the model with all training domains using the newly selected hyperparameters. To test adaptivity, we fine-tune the last layer of the model by randomly selecting examples in the target domain.

We re-sample datasets such that label distributions are balanced and consistent across all domains. We use a pre-trained ResNet-18 model for our feature extractor and train the whole model using Adam optimizer with a learning rate $10^{-4}$.  For the fine-tuning test, we run 20 steps using Adam with a learning rate $10^{-2}$. \Cref{tab:pacs-invar} shows that ACTIR has competitive invariant performance compared to baselines. Furthermore, \Cref{fig:pacs-adaptive} demonstrates that ACTIR can also adapt given a small fine-tuning dataset. ACTIR does not perform very well on the Sketch domain (S) for both invariant and adaptive predictions. In this case, all training domains: art, cartoon, and photo have colors while the test domain does not. We suspect training domains are just not diverse enough for the model to successfully disentangle invariant and adaptive features. 

\begin{table}[h]
\caption{ACTIR has good invariant performance on PACS dataset. Table shows accuracy ($\%$) on PACS. Standard errors are over $5$ runs.}
\label{tab:pacs-invar}
\begin{center}
\begin{small}
\begin{sc}
\begin{tabular}{lccccr}
\toprule
Method & A & C & P & S\\
\midrule
ERM     & 79.30$\pm$0.50 & 74.30$\pm$0.73 & 93.03$\pm$0.26 & 65.40$\pm$1.49\\
IRM     & 78.69$\pm$0.70 & 75.38$\pm$1.49 & 93.32$\pm$0.32 & 65.61$\pm$2.55\\
MAML    & 74.18$\pm$4.00 & 75.15$\pm$1.66 & 91.05$\pm$1.09  & 63.56$\pm$3.88\\
\textbf{ACTIR}  &\textbf{82.55$\pm$0.45}  &\textbf{76.62$\pm$0.65}  & \textbf{94.17$\pm$0.12} & 62.14$\pm$1.30 \\
\bottomrule
\end{tabular}
\end{sc}
\end{small}
\end{center}
\end{table}

\begin{figure}
  \centering
\begin{subfigure}{.3\linewidth}
  \centering
  \includegraphics[width=\linewidth]{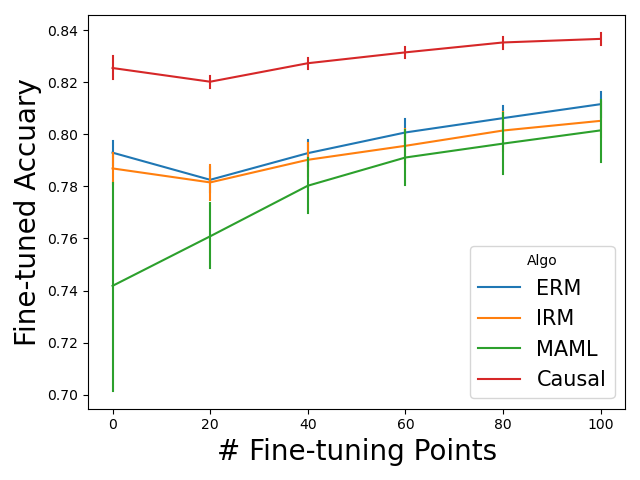}
  \caption{A}
  \label{fig:pacs-adaptive-sub1}
\end{subfigure}%
\begin{subfigure}{.3\linewidth}
  \centering
  \includegraphics[width=\linewidth]{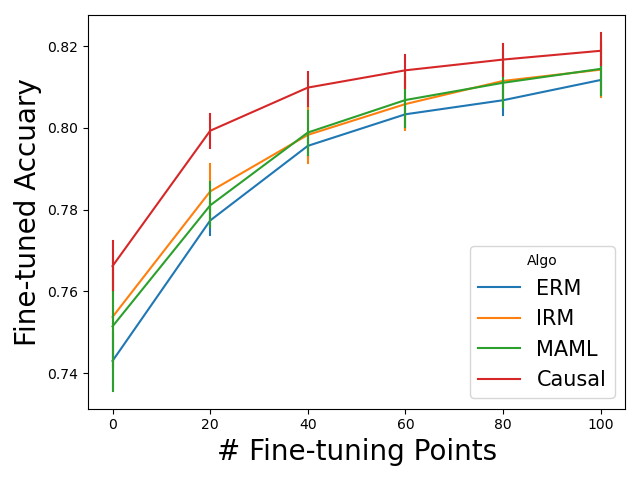}
  \caption{C}
  \label{fig:pacs-adaptive-sub2}
\end{subfigure}
\vskip\baselineskip
\begin{subfigure}{.3\linewidth}
  \centering
  \includegraphics[width=\linewidth]{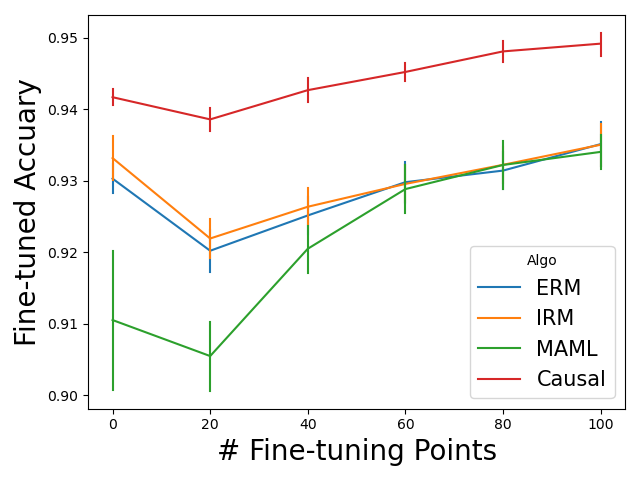}
  \caption{P}
  \label{fig:pacs-adaptive-sub3}
\end{subfigure}
\begin{subfigure}{.3\linewidth}
  \centering
  \includegraphics[width=\linewidth]{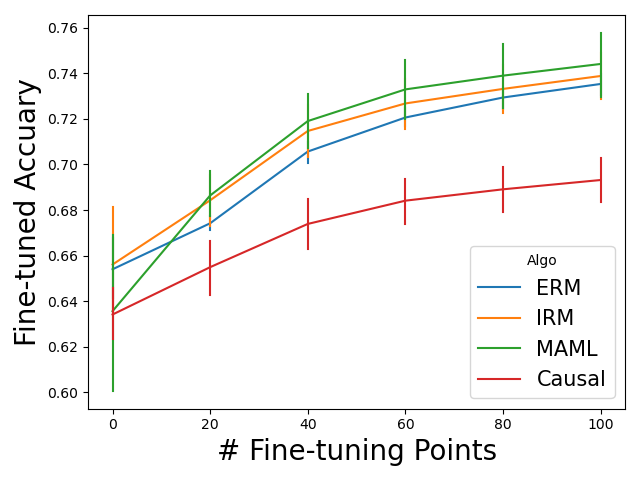}
  \caption{S}
  \label{fig:pacs-adaptive-sub4}
\end{subfigure}
 \caption{ACTIR has the best accuracy after fine-tuning for all but one domain. Graph shows accuracy ($\%$) on PACS. Standard errors are over $5$ runs. For each run, the accuracy is averaged for 100 fine-tuning tests.}
 \label{fig:pacs-adaptive}
\end{figure}

\begin{figure}[h]
  \centering
\begin{subfigure}{.3\linewidth}
  \centering
  \includegraphics[width=\linewidth]{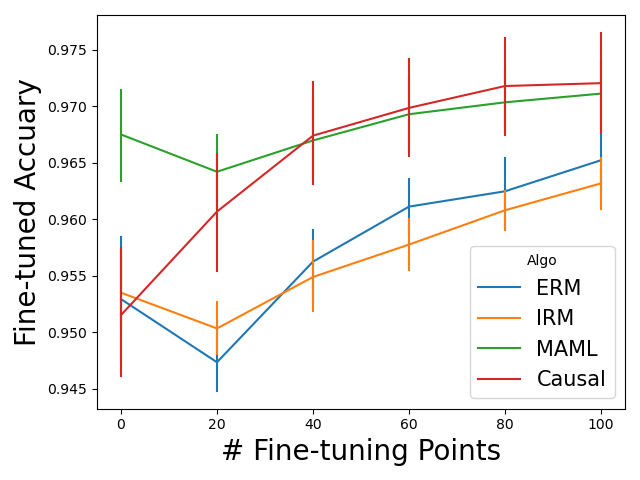}
  \caption{C}
  \label{fig:vlcs-adaptive-sub1}
\end{subfigure}%
\begin{subfigure}{.3\linewidth}
  \centering
  \includegraphics[width=\linewidth]{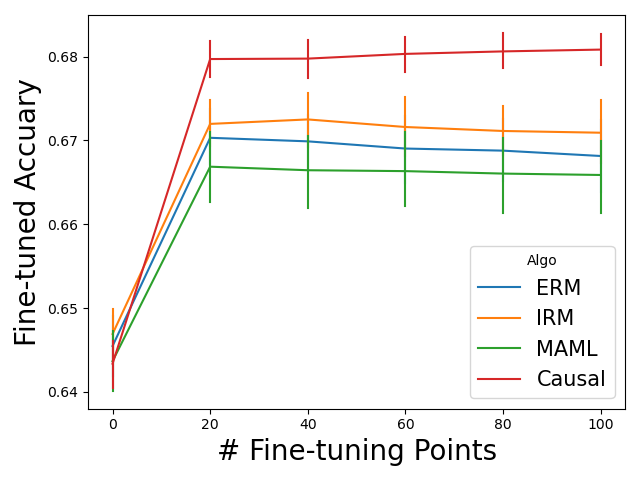}
  \caption{L}
  \label{fig:vlcs-adaptive-sub2}
\end{subfigure}
\vskip\baselineskip
\begin{subfigure}{.3\linewidth}
  \centering
  \includegraphics[width=\linewidth]{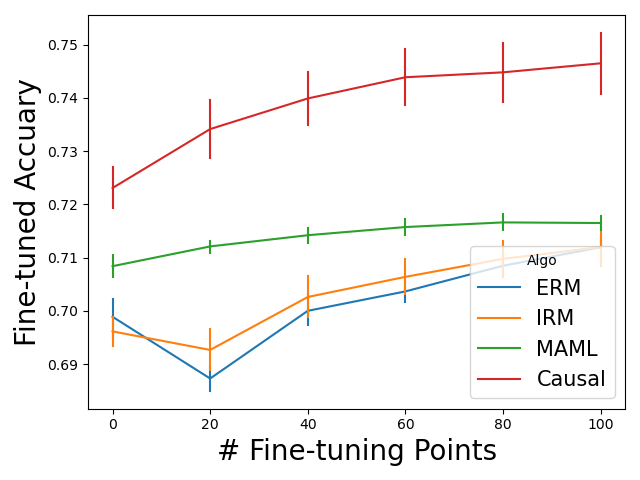}
  \caption{S}
  \label{fig:vlcs-adaptive-sub3}
\end{subfigure}
\begin{subfigure}{.3\linewidth}
  \centering
  \includegraphics[width=\linewidth]{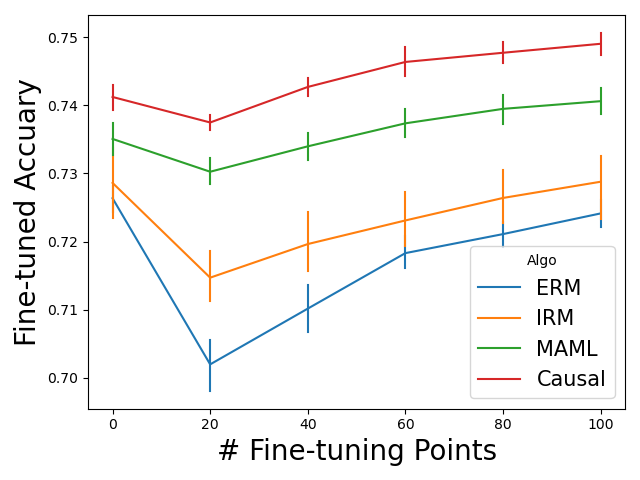}
  \caption{V}
  \label{fig:vlcs-adaptive-sub4}
\end{subfigure}
 \caption{ACTIR has competitive performance after fine-tuning on all domains. Graph shows accuracy ($\%$) on VLCS. Standard errors are over $5$ runs. For each run, the accuracy is averaged for 100 fine-tuning tests.}
\label{fig:vlcs-adaptive}
\end{figure}

\subsection{VLCS} \label{ssec:vlcs}
For all previous experiments, the prior distributions $P(Y)$ are the same for all domains. As suggested in \Cref{ssec:invariant-prediction}, if the prior distributions are the same, then the best counterfactually-invariant predictor in the training domain will also be the best counterfactually-invariant predictor in the test domain. But requiring label distributions to stay consistent is too stringent. In practice, however, ACTIR still works reasonably well even when classes are not balanced. We'll see this with experiments on VLCS \citep{DBLP:conf/iccv/FangXR13}. VLCS contains four photographic domains. Each image is of size $\{3, 224, 224 \}$ and belongs to one of $5$ classes: bird, car, chair, dog, and person. For each domain, images are collected differently. Ideally, an invariant predictor should be indifferent to the photo collecting process.

Similar to \Cref{ssec:pacs}, we also use a pre-trained ResNet-18 model for our feature extractor and train the whole model using Adam optimizer with a learning rate $10^{-4}$. The training procedure follows exactly as in \Cref{ssec:pacs}. The hyperparameters are also chosen using leave-one-domain-out cross-validation. For the fine-tuning test, we run 20 steps with a learning rate $10^{-2}$. For MAML, the fine-tuning learning rate is set to $10^{-3}$. Otherwise, the result for MAML is unstable. \Cref{tab:vlcs-invar} shows that ACTIR predictor has competitive invariant performance compared to baselines. Furthermore, \Cref{fig:vlcs-adaptive} demonstrates that ACTIR can also adapt given a small fine-tuning dataset. In particular, for the LabelMe dataset (L), four models have similar accuracy before fine-tuning and ACTIR has the best accuracy after fine-tuning.

\begin{table}[h]
\caption{ACTIR has good invariant performance on VLCS dataset. Table shows accuracy ($\%$) on VLCS. Standard errors are over $5$ runs.}
\label{tab:vlcs-invar}
\begin{center}
\begin{small}
\begin{sc}
\begin{tabular}{lccccr}
\toprule
Method & C & L & S & V\\
\midrule
ERM     & 95.29$\pm$0.58 & 64.55$\pm$0.41 & 69.89$\pm$0.42 & 72.65$\pm$0.22\\
IRM     & 95.35$\pm$0.38 & 64.69$\pm$0.34 & 69.61$\pm$0.30 & 72.86$\pm$0.58\\
MAML    & \textbf{96.75$\pm$0.47} & 64.37$\pm$0.40 & 70.84$\pm$0.24  & 73.51$\pm$0.28\\
\textbf{ACTIR}  &95.15$\pm$0.59  &64.34$\pm$0.31  & \textbf{72.31$\pm$0.41} & \textbf{74.12$\pm$0.21} \\
\bottomrule
\end{tabular}
\end{sc}
\end{small}
\end{center}
\end{table}

\section{Discussion}\label{sec:discuss}
This paper studies learning invariant and transportable representations for a specific class of anti-causal shift domains. We assume that all domains have a common anti-causal structure and are differentiated only by the distribution of certain unobserved confounders. This setup is a reasonable match for many practical problems. 

This work serves as a proof of concept for this anti-causal domain shift notion, showing that it can be translated into useful learning principles for domain adaptation. This opens the door for substantial future work. In particular, the practical training procedure we use can likely be refined. It would also be nice to find formal guarantees for robust models trained under this setup---e.g., relying on some notion of diversity of training domains.

%\newpage
\printbibliography

\newpage
\appendix

% \title{Appendix}
% \date{\vspace{-5ex}}
% \maketitle

\section{Proofs}

\causalsign*
\begin{proof}
Reading d-separation from the causal graphs. We have that $X_{z} \independent X_{z}^{\perp} \; \vert \; Y$. $g(\cdot)$ is a function of $X_{z}^{\perp}$ and $h(\cdot)$ is a function of $X_{z}$. Because function of independent variables are also independent, the theorem is proven.
\end{proof}

\condindep*
\begin{proof}
\begin{equation*}
\begin{split}
    \mathbb{E}[A \cdot (B - \mathbb{E}[B|D])] &=  \mathbb{E}[\mathbb{E}[A \cdot (B - \mathbb{E}[B|D])] | D ] \quad \text{law of total expectation} \\
    & = \mathbb{E}[\mathbb{E}[A|D] \cdot \mathbb{E}[(B - \mathbb{E}[B|D]) | D ] \quad \text{conditional independence} \\
    & = 0
\end{split}
\end{equation*}
\end{proof}

\section{Counterexample: When the Data Generating Process Does not Fit Causal Assumptions} \label{ssec:counter}

\begin{table}[b]
\caption{IRM can outperform ACTIR on data not obeying the anti-causal structure. Table shows accuracy ($\%$) on synthetic dataset. Note that adaptation $(n)$ means that the predictor is tuned on $n$ number of labelled examples. Standard errors are over $100$ randomly generated datasets, for both initial training and adaptation.}
\label{tab:syn-counter-full}
\begin{center}
\begin{small}
\begin{sc}
\begin{tabular}{lcccr}
\toprule
Method & Test Acc. &  Adaptation (5) & Adaptation  (10) \\
\midrule
ERM     & 11.57$\pm$0.71 &11.58$\pm$0.71 & 11.59$\pm$0.71\\
IRM     & \textbf{69.61$\pm$1.26} & \textbf{69.61$\pm$1.26} &  \textbf{69.61$\pm$1.26} \\
MAML    & 11.57$\pm$0.71 & 11.83$\pm$0.75 &  11.93$\pm$0.78\\
\textbf{ACTIR}    & 43.51$\pm$2.63 & 62.37$\pm$2.44 & 64.17$\pm$2.63\\
\bottomrule
\end{tabular}
\end{sc}
\end{small}
\end{center}
\end{table}

ACTIR assumes the anti-causal structure. Specifically, it needs the conditional independence condition implied by Theorem~\ref{thm:causal-sign} to hold. If this condition fails, the algorithm can fail. To see this, we create synthetic datasets with the following structural equations:

\begin{equation*}
\begin{split}
    &X_y  \leftarrow \mathrm{Bern}(0.5) \\
    &Y  \leftarrow \mathrm{XOR}(X_y, U) \quad U \leftarrow \mathrm{Bern}(0.75) \\
    &Z  \leftarrow \mathrm{XOR}(Y, U_z) \quad U_z \leftarrow \mathrm{Bern}(\beta_e) \\
    &X_z  \leftarrow \mathrm{XOR}(Z, X_y)
\end{split}
\end{equation*}
where input $X$ is $(X_y, X_z)$ and $\mathrm{Bern}(\beta)$ means that a random variable is $1$ with probability $\beta$ and $0$ with probability $1 - \beta$. We create two training domains with $\beta_e \in \{0.95, 0.8\}$, one validation domain with $\beta_e = 0.2$ and one test domain with $\beta_e = 0.1$. 
Here, the conditional independence no longer holds because $X_y$ directly influences $X_z$. 

We use a three-layer neural network with hidden size $8$ and ReLU activation for $\Phi$ and train the neural network with Adam optimizer. The hyperparameters are chosen based on performance on the validation set. For the fine-tuning test, we run $20$ steps with a learning rate $10^{-2}$. The result is shown in \Cref{tab:syn-counter-full}. Because the conditional independence condition fails, ACTIR no longer has competitive invariant performance against IRM, even though it still outperforms ERM. One possible solution for this problem is to use a different causal regularizer that relies on different independence conditions. Nevertheless, ACTIR still has decent adaptive performance given a small fine-tuning set. Similar to the synthetic experiment in \Cref{ssec:synth}, ERM has a test accuracy of $10\%$, suggesting that it uses only spurious features.

\end{document}